\documentclass[conference]{IEEEtran}
\IEEEoverridecommandlockouts
\usepackage{cite}
\usepackage{amsmath,amssymb,amsfonts}
\usepackage{graphicx}
\usepackage{textcomp}
\usepackage{xcolor}
\usepackage{url}
\usepackage{placeins}
\usepackage{hyperref}
\usepackage{float}
\usepackage{tabularx,colortbl}
\usepackage{ifthen}
\usepackage{caption,subcaption}
\usepackage{amsmath}
\usepackage{booktabs}
\renewcommand{\thetable}{\Roman{table}}
\usepackage{cleveref}

\usepackage{enumitem}
\usepackage{stfloats}
\usepackage{wrapfig}
\usepackage{verbatim}
\usepackage{tabularx}
\usepackage{booktabs}
\usepackage{multirow}
\usepackage{caption}
\usepackage{flushend}

\usepackage{amsthm}
\newtheorem{theorem}{Theorem}[section]
\newtheorem{lemma}[theorem]{Lemma}
\newtheorem{assumption}[theorem]{Assumption}
\usepackage{algorithm}
\usepackage{algorithmicx}
\usepackage{algpseudocode}
\usepackage{adjustbox}
\usepackage{titlesec}

\begin{document}

\title{TRACER: Trajectory Risk Aggregation for Critical Episodes in Agentic Reasoning\\
}
\author{Sina Tayebati$^1$, Divake Kumar$^1$, Nastaran Darabi$^1$, Davide Ettori$^1$, Ranganath Krishnan$^2$, Amit Ranjan Trivedi$^1$ \vspace{2mm} \\
$^1$University of Illinois at Chicago\\ $^2$AI Labs at Capital One}

\maketitle

\begin{abstract}
Estimating uncertainty for \emph{AI agents} in real-world multi-turn tool-using interaction with humans is difficult because failures are often triggered by sparse \emph{critical episodes} (e.g., looping, incoherent tool use, or user-agent miscoordination) even when local generation appears confident. Existing uncertainty proxies focus on single-shot text generation and therefore miss these trajectory-level breakdown signals. We introduce \textbf{TRACER}, a trajectory-level uncertainty metric for dual-control Tool-Agent-User interaction. TRACER combines content-aware surprisal with situational-awareness signals, semantic and lexical repetition, and tool-grounded coherence gaps, and aggregates them using a tail-focused risk functional with a MAX-composite step risk to surface decisive anomalies.
We evaluate TRACER on $\tau^2$-bench \cite{barres2025tau2bench} by predicting task failure and selective task execution. To this end, TRACER improves AUROC by up to 37.1\% and AUARC by up to 55\% over baselines, enabling earlier and more accurate detection of uncertainty in complex conversational tool-use settings. Our code and benchmark are available at \textcolor{magenta}{\emph{\url{https://github.com/sinatayebati/agent-tracer}}}.
\end{abstract}

\begin{IEEEkeywords}
Agent Tracing, Agent Observability, Uncertainty Quantification
\end{IEEEkeywords}

\section{Introduction}
\label{sec:introduction}

Large language model (LLM) agents are increasingly deployed in interactive, open-ended settings that require sustained multi-turn dialogue, external tool use, and coordination with a user. In such agentic environments, failures are rarely caused by isolated token-level errors. Instead, breakdowns arise from trajectory-level phenomena: task drift, repetitive behavior, misinterpretation of tool outputs, or misalignment with the user's evolving intent. Crucially, these failures often occur despite fluent and locally confident generation, exposing a gap between standard uncertainty proxies and the realities of multi-turn, tool-using decisions.

Estimating agent uncertainty must therefore account for the dynamics of conversational trajectories. Most existing uncertainty measures operate at the token or sequence level, e.g., predictive entropy, self-consistency, or likelihood-based surrogates, and are typically evaluated in static, single-shot settings \cite{malinin2021uncertainty,kuhn2023semantic,wang2023selfconsistency,jiang2020know}. In task-oriented interaction, however, success depends on maintaining situation awareness across turns: tracking constraints, updating beliefs from observations, and adapting to user feedback and tool outputs. Failures are frequently driven by sparse critical episodes, such as looping, incoherent action–outcome alignment, or user-agent coordination collapse, rather than uniformly elevated uncertainty throughout the trajectory \cite{shinn2023reflexion,liu2024agentbench,wang2024mint,tan2024cradle}. As a result, global averages of token uncertainty systematically underweight the segments that determine success or failure.

\begin{figure*}[t]
  \centering
  \includegraphics[width=\linewidth]{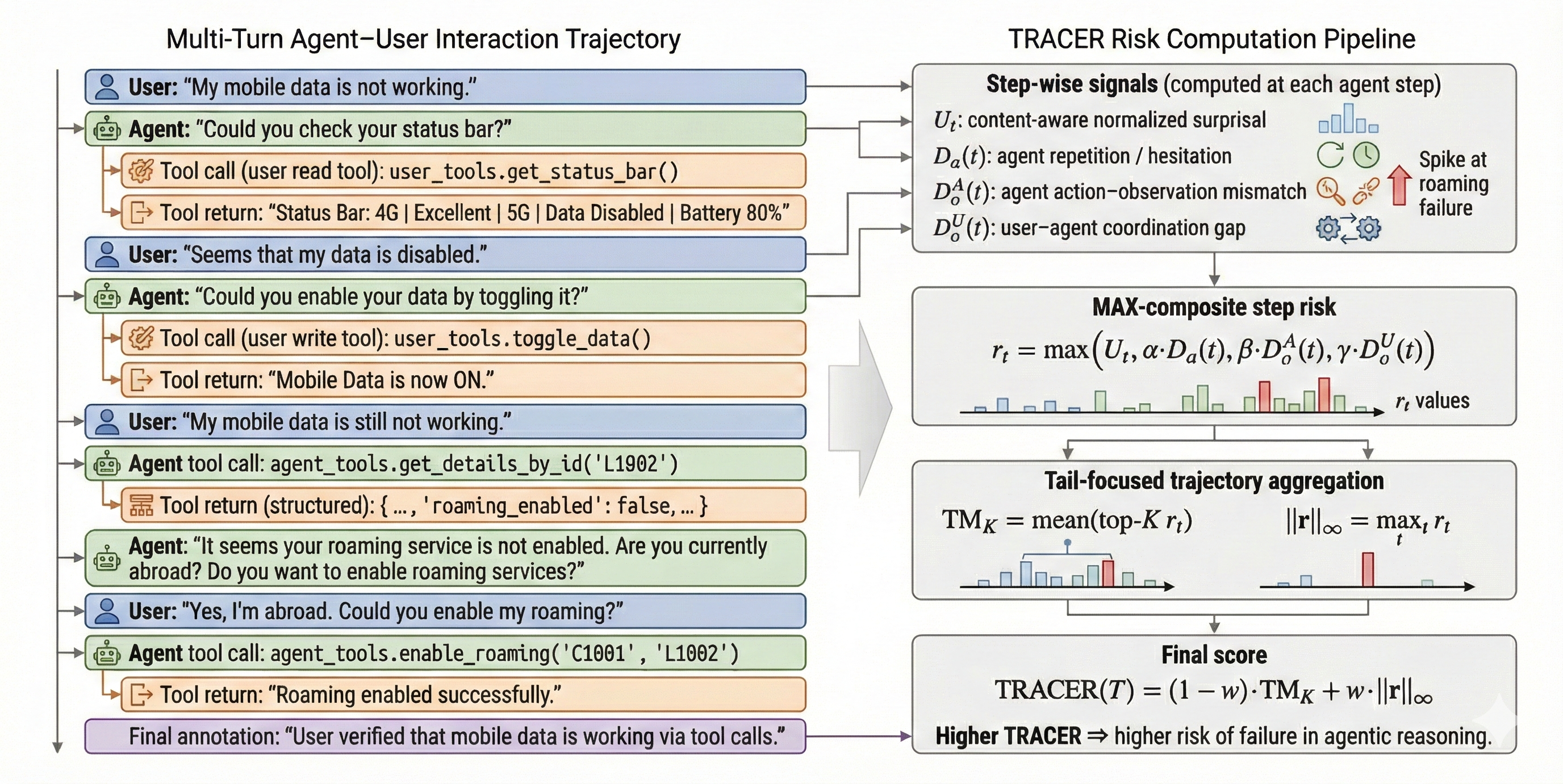}
  \caption{
Overview of TRACER for trajectory-level uncertainty estimation in agentic reasoning.
\textbf{Left:} A multi-turn Agent--User interaction with tool calls and delayed failure resolution.
\textbf{Right:} At each agent step $t$, content-aware surprisal $U_t$, agent repetition $D_a(t)$, action--observation mismatch $D_o^{A}(t)$, and user--agent coordination gap $D_o^{U}(t)$ are computed and combined via a MAX-composite step risk,
$r_t = \max(U_t, \alpha D_a(t), \beta D_o^{A}(t), \gamma D_o^{U}(t))$.
Trajectory risk is obtained through tail-focused aggregation (top-$K$ mean and $\ell_\infty$ norm).
}

  \label{fig:tracer_overview}
\end{figure*}

We introduce \textbf{TRACER} (\underline{T}rajectory \underline{R}isk \underline{A}ggregation for \underline{C}ritical \underline{E}pisodes in agentic \underline{R}easoning), a trajectory-level uncertainty metric for dual-control agentic environments in which both the agent and the user are stochastic and the agent reasons, communicates, and uses tools, as shown at Figure \ref{fig:tracer_overview}. TRACER extends prior uncertainty propagation approaches by (i) using content-aware normalized surprisal to emphasize epistemically meaningful tokens, (ii) incorporating situational-awareness indicators that capture looping and coherence gaps, and (iii) aggregating step risks with a tail-focused risk functional that emphasizes worst-case dialogue segments rather than average behavior. This design directly targets the dominant failure regime such as sparse but decisive uncertainty episodes that derail coherence.

Beyond introducing the metric, we provide formal justification for TRACER as a principled trajectory-level risk measure. We show that the core token uncertainty term admits an information-theoretic decomposition into intrinsic content uncertainty and epistemic mismatch, and that the aggregation defines a coherent tail-risk functional with stability guarantees. Under a sparse-hazard breakdown model, we establish conditions under which TRACER upper-bounds trajectory-level failure probability, grounding it as a theoretically motivated proxy for uncertainty-driven breakdown in multi-turn conversations.

\section{Related Work}
\label{sec:related_work}

\subsection{Uncertainty estimation in language models.}
Uncertainty quantification for LLMs has gained attention through predictive entropy, calibration, and Bayesian or ensemble-inspired methods \cite{malinin2021uncertainty,zhao2021calibrate}. Common proxies include token-level entropy and surprisal, sequence log-likelihood, temperature-scaled confidence, and sampling-based techniques such as self-consistency or generation variance \cite{kuhn2023semantic,wang2023selfconsistency}. While effective in static tasks, these measures are often insufficient in agentic settings where failure depends on long-horizon state tracking, tool interaction, and user feedback. Token-level confidence can remain high even as an agent enters repetitive loops or misuses tools, motivating uncertainty metrics that incorporate behavioral signals beyond likelihood \cite{shinn2023reflexion,lin2023teaching}.

\subsection{Calibration, selective prediction, and risk measures.}
Related work studies calibration and selective prediction, including confidence-based abstention, conformal prediction \cite{tayebati2025cap, kumar2025calibrated}, and risk control under distribution shift \cite{angelopoulos2021conformal}. These approaches provide guarantees for deferral by mapping a scalar uncertainty score to a selective decision rule \cite{geirhos2020shortcut, tayebati2025learning}. However, extending them to multi-turn agentic interaction is nontrivial, since the object of interest is a trajectory rather than a single prediction, and errors can be sparse yet catastrophic. TRACER connects to this literature by constructing a trajectory-level score that behaves as a coherent tail-risk measure, emphasizing worst-case dialogue segments rather than average uncertainty.

\subsection{Situation awareness and failure modes in agentic systems.}
Agentic evaluation emphasizes long-horizon planning, tool invocation, and state maintenance across dialogue \cite{jimenez2024swebench}. Empirical studies identify recurring failure modes such as infinite loops, instruction drift, hallucinated tool outputs, and brittle user coordination \cite{liu2024agentbench,wang2024mint}. These failures are often structural rather than semantic, manifesting as repeated actions, mismatched observations, and incoherent turn-to-turn transitions. TRACER operationalizes these phenomena through explicit situational-awareness indicators for repetition and coherence gaps, complementing token-level uncertainty with trajectory-diagnostics.

\subsection{Tool-using agents and multi-turn benchmarks.}
A growing set of benchmarks evaluates tool use, web navigation, and multi-step reasoning in interactive environments \cite{zhou2023webarena,jimenez2024swebench}. These settings highlight that observation feedback is external and may be delayed or noisy, and that success depends on aligning actions with tool outputs over multiple turns. Uncertainty estimation must therefore integrate action–outcome coherence and user–agent alignment while remaining sensitive to rare but decisive breakdown episodes \cite{duan2025uprop}. TRACER is designed for these settings by modeling dialogue as a dual-control trajectory and scoring risk at the step level before tail-focused aggregation.

\subsection{Contributions and novelty.}
TRACER introduces a trajectory-level uncertainty metric for real-world agentic interaction. We propose step-wise signals that go beyond token-level likelihood to capture uncertainty mechanisms specific to multi-turn, tool-using dialogue: (i) a normalized surprisal term emphasizing epistemically meaningful content; (ii) \textit{lexical and semantic repetition signals} that detect degenerate looping and stagnation missed by standard proxies; and (iii) \textit{situation-awareness gap} indicators, $\mathbf{D_o}$ and $\mathbf{D_a}$, measuring inconsistencies between observed tool or environment feedback and subsequent agent statements ($D_o$), and between agent actions and implied task constraints or outcomes ($D_a$). Both indicators are trajectory-consistent, defined over multi-turn context windows, and reflect the agent’s ability to maintain coherent belief updates and action–outcome alignment under external feedback. These signals are aggregated using a \textit{tail-focused coherent risk functional} that emphasizes critical episodes rather than average behavior. We further provide theoretical support showing that TRACER is a principled trajectory-risk measure with stability guarantees and conditions under which it upper-bounds uncertainty-driven breakdown probability.

\section{Methodology}
\label{sec:methodology}

We develop a mathematically grounded framework for quantifying \emph{trajectory-level uncertainty risk} of conversational agents in the dual-control setting of $\tau^2$-Bench, where both the agent and the user follow stochastic policies. Our method builds on prior uncertainty propagation and situation-awareness based approaches to yield a tail-risk formulation suitable for dual-control interaction. Unlike prior uncertainty metrics that average token-level uncertainty over an interaction, we model failures in multi-turn, tool-using dialogue as arising from \emph{critical episodes}, such as looping, incoherent tool use, or user-agent coordination collapse, which can occur even when token uncertainty is low. TRACER therefore integrates (i) content-aware token surprisal, (ii) situational-awareness indicators for repetition and coherence gaps, and (iii) a tail-focused aggregation that emphasizes the most failure-relevant segments of a trajectory. Formal statements and proofs are provided in Appendix~\ref{app:formal_theorems}.

\subsection{Dual-Control Trajectory Model}
\label{sec:problem_formulation}

Let $\mathcal{A} \triangleq \{\mathrm{Agent}, \mathrm{User}\}$ denote the set of actors. A dialogue trajectory of length $N$ is
\begin{equation}
\label{eq:trajectory_def}
\mathcal{T} \triangleq \bigl\{ (a_t, x_t, o_t) \bigr\}_{t=1}^N,
\end{equation}
where $a_t \in \mathcal{A}$ identifies the actor at step $t$, $x_t \in \mathcal{X}$ is the emitted event (utterance or structured action), and $o_t \in \mathcal{O}$ is the resulting observation returned by the environment, including tool outputs, simulator responses, or counterparty messages. Each event is associated with a textual realization $z(x_t)\in\Sigma^\star$ and, when applicable, each observation with $z(o_t)\in\Sigma^\star$. TRACER maps trajectories to scores:
\begin{equation}
\label{eq:tracer_mapping}
\mathrm{TRACER}: \ \bigcup_{N\ge 1} (\mathcal{A}\times\mathcal{X}\times\mathcal{O})^N \ \to \ \mathbb{R}_{\ge 0},
\end{equation}
where larger values indicate a higher propensity for uncertainty-driven breakdown.

\subsection{\underline{Stage I}: Content-Aware Normalized Surprisal}
\label{sec:token_uncertainty}

Consider a generative step $t$ where the actor $a_t$ produces a token sequence
\begin{equation}
y_t \triangleq (w_{t,1}, w_{t,2}, \dots, w_{t,L_t}) \in \mathcal{V}^{L_t},
\end{equation}
conditioned on a context $\mathcal{C}_t$ (dialogue state, tool traces, and prior messages). Let
\begin{equation}
p_t(w_{t,j}) \triangleq \mathbb{P}\!\left(w_{t,j}\mid w_{t,<j}, \mathcal{C}_t\right)
\end{equation}
denote the predictive probability of token $w_{t,j}$. Averaging token-level surprisal over all tokens can be dominated by structural and function-word tokens, diluting the uncertainty signal. We thus only aggregate on \emph{content-bearing} tokens.

\textbf{Content-bearing token predicate.}
Fix (i) a stop-word set $\mathcal{S}_{\mathrm{stop}}\subset\mathcal{V}$, (ii) a predicate $\mathrm{Num}:\mathcal{V}\to\{0,1\}$ indicating whether a token is purely numeric (up to punctuation), and (iii) a probability threshold $\pi_0\in(0,1)$ to suppress highly predictable structural tokens. Define
\begin{equation}
\label{eq:content_predicate}
\begin{alignedat}{1}
\chi(w_{t,j})
&\triangleq
\mathbf{1}\!\left\{ p_t(w_{t,j}) \le \pi_0 \right\}
\cdot
\mathbf{1}\!\left\{ w_{t,j}\notin\mathcal{S}_{\mathrm{stop}} \right\} \\
&\quad\cdot
\mathbf{1}\!\left\{ \mathrm{Num}(w_{t,j})=0 \right\}.
\end{alignedat}
\end{equation}
\begin{equation}
\label{eq:content_index_set}
\text{Let~~~~}\mathcal{I}_t
\triangleq
\{ j\in\{1,\dots,L_t\} : \chi(w_{t,j})=1 \}.
\end{equation}

\textbf{Filtered normalized surprisal.}
We define the step-wise uncertainty signal
\begin{equation}
\label{eq:content_entropy}
U_t \triangleq
\begin{cases}
\displaystyle \frac{1}{|\mathcal{I}_t|}\sum_{j\in\mathcal{I}_t} \bigl[-\log p_t(w_{t,j})\bigr], & \text{if }|\mathcal{I}_t|>0,\\[1.25em]
\epsilon, & \text{if }|\mathcal{I}_t|=0,
\end{cases}
\end{equation}
for a small $\epsilon>0$. In $\tau^2$-Bench, both agent and user turns may be language models; whenever token probabilities are available,~\eqref{eq:content_entropy} applies symmetrically to either actor.

\subsection{\underline{Stage II}: Situational Awareness Indicators}
\label{sec:situational_awareness}

Token-level uncertainty alone cannot detect key agentic failure modes, particularly the \emph{false-confidence} regime, where $U_t$ is small despite looping or incoherent tool use. We introduce explicit indicators for repetition and coherence.

\subsubsection{Hybrid Local Repetition Functional}
\label{sec:repetition}

Let $\varphi:\Sigma^\star\to\mathbb{R}^d$ denote an embedding map. Define cosine similarity
\begin{equation}
\label{eq:cosine_sim}
\mathrm{sim}_{\mathrm{sem}}(u,v) \triangleq
\frac{\langle \varphi(u), \varphi(v)\rangle}{\|\varphi(u)\|_2\,\|\varphi(v)\|_2},
\end{equation}
with value $0$ if either vector is zero. Let $\mathrm{ContTok}(u)\subset\mathcal{V}$ denote the set of content-bearing tokens extracted from $u$ using the same stop-word and numeric filters as in~\eqref{eq:content_predicate}. Define the Jaccard overlap
\begin{equation}
\label{eq:jaccard}
\mathrm{sim}_{\mathrm{lex}}(u,v) \triangleq
\frac{|\mathrm{ContTok}(u)\cap \mathrm{ContTok}(v)|}{|\mathrm{ContTok}(u)\cup \mathrm{ContTok}(v)|},
\end{equation}
with $\mathrm{sim}_{\mathrm{lex}}(u,v)=0$ if the denominator is zero.

Fix a local window size $m\in\mathbb{N}$. For an agent turn at time $t$ (i.e., $a_t=\mathrm{Agent}$), define
\begin{equation}
\begin{aligned}
\mathcal{H}_{\mathrm{A}}(t)
&\triangleq
\{\, t' < t : a_{t'}=\mathrm{Agent} \,\}, \\
\mathcal{W}(t)
&\triangleq
\{\, t' \in \mathcal{H}_{\mathrm{A}}(t) : t-m \le t' < t \,\}.
\end{aligned}
\end{equation}

Let $u_t \triangleq z(x_t)$. The hybrid repetition score is
\begin{equation}
\label{eq:hybrid_repetition}
\begin{split}
D_{\mathrm{rep}}(t)
\triangleq
\max_{t' \in \mathcal{W}(t)}
\Bigl[
&\mathrm{sim}_{\mathrm{sem}}(u_t,u_{t'}) \\
&\cdot \mathrm{sim}_{\mathrm{lex}}(u_t,u_{t'})
\Bigr],
\end{split}
\end{equation}
with $D_{\mathrm{rep}}(t)=0$ if $\mathcal{W}(t)=\emptyset$. The product implements a soft conjunction: true loops exhibit both high semantic similarity and high lexical overlap, while valid enumeration typically preserves semantic similarity but reduces lexical overlap due to entity changes.

\textbf{Notation.}
To align with the action-side drift or repetition signal used in our implementation, we also write
\begin{equation}
\label{eq:Da_def}
D_a(t) \triangleq D_{\mathrm{rep}}(t),
\end{equation}
and use $D_a(t)$ and $D_{\mathrm{rep}}(t)$ interchangeably.

\subsubsection{Inference-Gap (Coherence) Functionals}
\label{sec:inference_gap}

We quantify semantic incoherence between actions and outcomes, as well as between agent messages and user responses. Define semantic distance
\begin{equation}
\label{eq:semantic_distance}
\delta(u,v)\triangleq 1-\mathrm{sim}_{\mathrm{sem}}(u,v).
\end{equation}

\textbf{Agent coherence gap.}
Let $\mathcal{T}_{\mathrm{tool}}\subset\{1,\dots,N\}$ denote indices at which the agent emits an action whose semantics should align with the resulting observation, such as tool usage. For $t\in\mathcal{T}_{\mathrm{tool}}$, define
\begin{equation}
\label{eq:do_agent}
D_{o}^{\mathrm{A}}(t)\triangleq \delta\!\bigl(z(x_t), z(o_t)\bigr).
\end{equation}

\textbf{User coherence (coordination) gap.}
For a user turn at time $t$ with $a_{t-1}=\mathrm{Agent}$ and $a_t=\mathrm{User}$, define
\begin{equation}
\label{eq:do_user}
D_{o}^{\mathrm{U}}(t)\triangleq \delta\!\bigl(z(x_{t-1}), z(x_t)\bigr).
\end{equation}

\subsection{\underline{Stage III}: MAX-Composite Step Risk and Tail Aggregation}
\label{sec:aggregation}

\subsubsection{MAX-composite step risk}
\label{sec:step_risk}

Let $U_t \in \mathbb{R}_{\ge 0}$ denote the content-aware normalized surprisal at step $t$, and let
$D_{\mathrm{rep}}(t)\in[0,1]$, $D_o^{\mathrm{A}}(t)\in[0,2]$, and $D_o^{\mathrm{U}}(t)\in[0,2]$
denote the repetition, agent-coherence, and user-coherence indicators defined in
Sec.~\ref{sec:repetition}--\ref{sec:inference_gap}. We form the following nonnegative component risks:
\begin{align}
\label{eq:step_components}
r_t^{U} &\triangleq U_t, \\
r_t^{\mathrm{rep}} &\triangleq \alpha\,D_{\mathrm{rep}}(t), \\
r_t^{\mathrm{A}} &\triangleq \beta\,D_o^{\mathrm{A}}(t)\,\mathbf{1}\{a_t=\mathrm{Agent}\}, \\
r_t^{\mathrm{Uo}} &\triangleq \gamma\,D_o^{\mathrm{U}}(t)\,\mathbf{1}\{a_t=\mathrm{User}\},
\end{align}
where $\alpha,\beta,\gamma\ge 0$ are scalar weights and $\mathbf{1}\{\cdot\}$ denotes the indicator function. The \emph{MAX-composite} step risk is the pointwise maximum of these components:
\begin{equation}
\label{eq:step_risk_max}
r_t \triangleq \max\bigl\{ r_t^{U},\ r_t^{\mathrm{rep}},\ r_t^{\mathrm{A}},\ r_t^{\mathrm{Uo}} \bigr\}.
\end{equation}
Equivalently, letting $\mathbf{c}_t \triangleq (r_t^{U}, r_t^{\mathrm{rep}}, r_t^{\mathrm{A}}, r_t^{\mathrm{Uo}})\in\mathbb{R}^4_{\ge 0}$,
we have $r_t=\|\mathbf{c}_t\|_{\infty}$. This aggregation emphasizes the dominant failure signal at each step and avoids diluting sharp breakdown evidence by summing weaker components.

\subsubsection{Tail mean and worst-case risk}
\label{sec:topk_and_max}

Let $\mathbf{r}(\mathcal{T})\triangleq (r_1,\dots,r_N)\in\mathbb{R}^N_{\ge 0}$ and $r_{(1)}\ge r_{(2)}\ge \cdots \ge r_{(N)}$ denote the descending order statistics of $\mathbf{r}$. Fix a tail fraction $k\in(0,1]$ and define
\begin{equation}
\label{eq:K_def}
K \triangleq \max\{1,\lfloor kN\rfloor\}.
\end{equation}
Define the empirical top-$k$ tail mean
\begin{equation}
\label{eq:topk_mean}
\mathrm{TM}_k(\mathbf{r}) \triangleq \frac{1}{K}\sum_{i=1}^{K} r_{(i)}.
\end{equation}
Also define $\|\mathbf{r}\|_\infty \triangleq r_{(1)}$.


\subsubsection{TRACER trajectory score}
\label{sec:tracer_score}

TRACER convexly combines tail mean and worst-case risk:
\begin{equation}
\label{eq:tracer_final}
\begin{aligned}
\mathrm{TRACER}_{\boldsymbol{\theta}}(\mathcal{T})
\triangleq\;
&(1-w)\,\mathrm{TM}_k\!\bigl(\mathbf{r}(\mathcal{T})\bigr)
\;+\;
w\,\|\mathbf{r}(\mathcal{T})\|_\infty, \\
\boldsymbol{\theta}
\triangleq\;
&(\alpha,\beta,\gamma,k,w),
\qquad
w \in [0,1].
\end{aligned}
\end{equation}
The weight $w$ interpolates between chronic uncertainty patterns (captured by $\mathrm{TM}_k$) and acute catastrophic breakdowns (captured by the maximum).

\subsection{Risk Decomposition and Failure Control}

We establish that TRACER is a mathematically principled trajectory-level uncertainty metric for multi-turn, task-oriented, dual-control dialogue. Our results characterize (i) the information-theoretic meaning of the content-aware surprisal signal, (ii) coherence and stability of the tail-risk aggregation, and (iii) conditions under which TRACER upper-bounds trajectory-level breakdown probability when failures arise from sparse critical episodes.

\textbf{Probabilistic setup.}
Let $(\Omega,\mathcal{F},\mathbb{P})$ support the dialogue trajectory $\mathcal{T}=\{(a_t,x_t,o_t)\}_{t=1}^N$ induced by the environment and stochastic agent and user policies. Let $\{\mathcal{F}_t\}_{t=0}^N$ be the filtration generated by the partial history up to step $t$. We model uncertainty via a possibly latent stepwise random variable $C_t\in[0,1]$, measurable w.r.t.\ $\mathcal{F}_t$, representing the presence of a failure-inducing condition (e.g., looping, incoherent tool use, or user-agent miscoordination). The breakdown event is
\[
B \triangleq \left\{\max_{t\in\{1,\dots,N\}} C_t > 0\right\},
\]
formalizing that a single critical episode can derail an otherwise coherent interaction.

\textbf{Information-theoretic interpretation of content-aware surprisal.}
For a generative step $t$, let $Q_{t,j}(\cdot)=\mathbb{P}(W_{t,j}=\cdot\mid W_{t,<j},\mathcal{C}_t)$ denote the true conditional token distribution and let $P_{t,j}(\cdot)$ denote the model distribution. The content-filtered cross-entropy
\[
H_t^{\mathrm{cont}}(Q,P)
=
\frac{1}{|\mathcal{I}_t|}
\sum_{j\in\mathcal{I}_t}
\mathbb{E}_{W\sim Q_{t,j}}\!\left[-\log P_{t,j}(W)\right]
\]
admits the decomposition
\[
H_t^{\mathrm{cont}}(Q,P)
=
\frac{1}{|\mathcal{I}_t|}
\sum_{j\in\mathcal{I}_t} H(Q_{t,j})
+
\frac{1}{|\mathcal{I}_t|}
\sum_{j\in\mathcal{I}_t} \mathrm{KL}(Q_{t,j}\|P_{t,j}),
\]
separating intrinsic (aleatoric) content uncertainty from epistemic mismatch. The empirical statistic $U_t$ used by TRACER is an unbiased estimator of $H_t^{\mathrm{cont}}(Q,P)$ conditional on the step context. This justifies $U_t$ as a content-focused uncertainty proxy that is not diluted by predictable structural tokens. When token probabilities are unavailable (e.g., black-box user simulators), TRACER omits $U_t$ and operates on situational-awareness signals only; all subsequent guarantees continue to hold with $r_t^{U}=0$.

\textbf{Coherent and stable tail-risk aggregation.}
Let $\mathbf{r}=(r_1,\dots,r_N)$ denote the MAX-composite step risks. TRACER aggregates risk via
\[
\rho_{k,w}(\mathbf{r})
\triangleq
(1-w)\,\mathrm{TM}_k(\mathbf{r}) + w\,\|\mathbf{r}\|_\infty,
\]
where $\mathrm{TM}_k$ is the empirical top-$k$ tail mean (CVaR). Since $\mathrm{TM}_k$ is a spectral risk measure and $\|\cdot\|_\infty$ is a norm, $\rho_{k,w}$ is a coherent risk functional: it is monotone, translation invariant, positively homogeneous, and subadditive. Moreover, $\rho_{k,w}$ is $1$-Lipschitz under $\ell_\infty$, ensuring robustness to local perturbations in step-level uncertainty signals. TRACER is monotone in each component signal: increasing any $U_t$, $D_a(t)$, or $D_o(t)$ cannot decrease the trajectory risk score.

\textbf{Failure-risk control under sparse hazards.}
Define the conditional hazard $\lambda_t=\mathbb{P}(C_t>0\mid\mathcal{F}_{t-1})$. By a union bound,
\[
\mathbb{P}(B)
\le
\mathbb{E}\!\left[\sum_{t=1}^N \lambda_t\right].
\]
We assume (i) \emph{risk-dominates-hazard}: there exists $c>0$ such that
$\lambda_t \le \min\{1,c\,r_t\}$ for all $t$, and (ii) \emph{tail sparsity}: for
$K=\max\{1,\lfloor kN\rfloor\}$, there exists $\eta\ge0$ such that, 
\[
\sum_{i=K+1}^{N} r_{(i)} \le \eta .
\]
Under these conditions,
\begin{equation}
\label{eq:main_breakdown_bound}
\mathbb{P}(B)
\;\le\;
c\,K\,\mathbb{E}\!\left[\mathrm{TM}_k(\mathbf{r})\right]
\;+\;
c\,\eta .
\end{equation}
Since $\mathrm{TRACER}(\mathcal{T}) \ge (1-w)\mathrm{TM}_k(\mathbf{r})$ for $w<1$, we further obtain
\[
\mathbb{P}(B)
\;\le\;
\frac{cK}{1-w}\,
\mathbb{E}\!\left[\mathrm{TRACER}(\mathcal{T})\right]
\;+\;
c\,\eta .
\]
The bound is conservative by construction, relying on a union bound and worst-case dominance, but is sufficient to justify tail-focused aggregation under sparse episodes.

\textbf{Dual-control attribution.}
Because exactly one actor acts per step, the MAX construction yields a clean actor-wise decomposition of step risks. By subadditivity of $\rho_{k,w}$,
\[
\mathrm{TRACER}(\mathcal{T})
\le
\mathrm{TRACER}_{\mathrm{Agent}}(\mathcal{T})
+
\mathrm{TRACER}_{\mathrm{User}}(\mathcal{T}),
\]
enabling attribution of breakdown risk to agent-driven versus user-driven uncertainty without double counting. Unlike additive aggregation, the MAX step risk preserves sensitivity to single decisive failures without requiring calibration across heterogeneous uncertainty signals.

\textbf{Computational Complexity.}
\label{sec:complexity}
Let $L_t$ denote the token length at step $t$, and let $c_{\varphi}(z)$ denote the cost of computing the embedding $\varphi(z)$. Computing the content-aware surprisal~\eqref{eq:content_entropy} over a trajectory requires evaluating token log-probabilities and costs
\(
\mathcal{O}\!\left(\sum_{t=1}^N L_t\right),
\)
assuming token probabilities are available from the generating model.

Situational-awareness indicators require embedding-based comparisons. With embedding caching and a local window of size $m$, the dominant embedding cost is
\[
\mathcal{O}\!\left(
\sum_{t=1}^N c_{\varphi}(z(x_t))
+
\sum_{t=1}^N c_{\varphi}(z(o_t))
\right),
\]
while all similarity computations within the window incur only $\mathcal{O}(mN)$ arithmetic operations and are negligible relative to embedding cost.

The MAX-composite step risk~\eqref{eq:step_risk_max} is computed in constant time per step, contributing $\mathcal{O}(N)$ total cost. Tail aggregation requires sorting the step risks $\{r_t\}_{t=1}^N$, incurring $\mathcal{O}(N\log N)$ time, or $\mathcal{O}(N)$ expected time using selection-based top-$K$ algorithms. Overall, the complexity is
\[
\mathcal{O}\!\left(
\sum_{t=1}^N L_t
+
\sum_{t=1}^N c_{\varphi}(z(x_t))
+
\sum_{t=1}^N c_{\varphi}(z(o_t))
+
N\log N
\right),
\]
with memory complexity linear in $N$. For parameter identification over a finite grid of $\boldsymbol{\theta}$ values, step-level quantities $(U_t,D_a,D_o^{\mathrm{A}},D_o^{\mathrm{U}})$ are precomputed once and reused. As a result, evaluation across candidate parameters is dominated by repeated tail aggregation and lightweight arithmetic, making the marginal cost per configuration $\mathcal{O}(N\log N)$.


\begin{table}[t]
\centering
\caption{Key statistics for the $\tau^{2}$-bench domains.}
\label{tab:tau2_stats}
\renewcommand{\arraystretch}{1.05}
\setlength{\tabcolsep}{4pt}
\resizebox{\linewidth}{!}{%
\begin{tabular}{lccc}
\toprule
 & Retail & Airline & Telecom \\
\midrule
\textbf{Agent DB} &
500U, 50P, 1kO &
500U, 300F, 2kRes &
5P, 9L, 4C \\
\textbf{Agent Tools} &
7W, 6Rd &
6W, 6Rd &
6W, 7Rd \\
\textbf{User Tools} &
-- &
-- &
15W, 15Rd \\
\textbf{Tasks} &
115 &
50 &
114 \\
\bottomrule
\end{tabular}
}
\vspace{3pt}
\begin{minipage}{\linewidth}
\vspace{5pt}\footnotesize
\emph{Notation:}
U=users, P=products or plans, F=flights, O=orders, Res=reservations,
L=lines, C=customers; W=write, Rd=read.
\noindent\rule{\linewidth}{0.6pt}
\end{minipage}
\end{table}

\begin{table*}[ht]
\centering
\caption{\textbf{AUROC / AUARC ↑} for task failure prediction and selective task completion.
Best results per row are \textbf{bold}.}
\label{tab:auroc_auarc}
\setlength{\tabcolsep}{4pt}
\renewcommand{\arraystretch}{1}
\footnotesize
\resizebox{0.9\textwidth}{!}{%
\begin{tabular}{cc|ccccc}
\toprule
Model & Domain
& Norm. Ent.
& SelfConf
& SemEnt
& SAUP
& TRACER (ours) \\
& & AUROC / AUARC & AUROC / AUARC & AUROC / AUARC & AUROC / AUARC & AUROC / AUARC \\
\midrule
\multirow{3}{*}{gemini-2.5-pro}
& Airline
& 0.600 / 0.514 & 0.462 / 0.451 & 0.603 / 0.516 & 0.595 / 0.517 & \textbf{0.735 / 0.629} \\
& Retail
& 0.553 / 0.661 & 0.440 / 0.590 & 0.556 / 0.673 & 0.525 / 0.684 & \textbf{0.673 / 0.725} \\
& Telecom
& 0.649 / 0.392 & 0.507 / 0.283 & 0.651 / 0.395 & 0.639 / 0.379 & \textbf{0.691 / 0.517} \\
\midrule
\multirow{3}{*}{gemini-2.5-flash}
& Airline
& 0.568 / 0.612 & 0.663 / 0.645 & 0.666 / 0.648 & 0.540 / 0.599 & \textbf{0.725 / 0.697} \\
& Retail
& 0.468 / 0.497 & 0.530 / 0.544 & 0.533 / 0.547 & 0.428 / 0.467 & \textbf{0.707 / 0.670} \\
& Telecom
& 0.559 / 0.358 & 0.423 / 0.257 & 0.621 / 0.392 & 0.673 / 0.446 & \textbf{0.809 / 0.520} \\
\midrule
\multirow{3}{*}{gpt-4.1-mini}
& Airline
& 0.538 / 0.402 & 0.290 / 0.236 & 0.541 / 0.427 & 0.531 / 0.403 & \textbf{0.742 / 0.615} \\
& Retail
& 0.617 / 0.571 & 0.577 / 0.555 & 0.620 / 0.579 & 0.609 / 0.584 & \textbf{0.689 / 0.632} \\
& Telecom
& 0.548 / 0.270 & 0.683 / 0.389 & 0.686 / 0.394 & 0.534 / 0.347 & \textbf{0.765 / 0.613} \\
\bottomrule
\end{tabular}
}
\end{table*}

\section{Results and Discussions}
\label{sec:experiments}

\subsection{Evaluation Protocol}

We evaluate TRACER on the $\tau^2$-bench dual-control environment \cite{barres2025tau2bench}, a testbed for multi-turn \emph{Tool--Agent--User} (TAU) interactions in which both the conversational agent and a simulated user act on a shared, partially observed world via natural language and tool calls. Interactions are modeled as a two-player Dec-POMDP \cite{oliehoek2016decpomdp}, capturing realistic scenarios, such as troubleshooting, that require goal-directed tool use.

\textbf{Dec-POMDP formalism.}
The interaction is defined by $(\mathcal{S}, \{\mathcal{A}_i\}, \{\mathcal{O}_i\}, \mathcal{T}, \mathcal{R}, \mathcal{U}, \mathcal{M})$ with players $i\in\{\texttt{agent},\texttt{user}\}$. At each turn, exactly one player acts by emitting a message in $\mathcal{M}$ or issuing a tool call, after which the environment transitions and returns player-specific observations. This structure is central to TRACER, as uncertainty in TAU settings is interactional, arising from partial observability, tool-mediated state changes, and user-agent miscoordination rather than uncertainty about a final answer.

\textbf{Domains and datasets.}
As summarized in Table~\ref{tab:tau2_stats}, we evaluate on all verified $\tau^2$-bench domains, including airline, retail, and telecom \cite{barres2025tau2bench}. In each domain, the agent accesses database-style tools (e.g., retrieval and verification), while the user accesses device-level tools (e.g., toggling phone settings), inducing decentralized control and requiring explicit coordination \cite{qin2023toolllm}. Tasks are programmatically generated and automatically verifiable; in telecom, success is determined via assertion checks on the final world state. We follow the benchmark’s task definitions and success criteria.

\textbf{Agent setting.}
We follow the standard $\tau^2$-bench agent implementation protocol \cite{barres2025tau2bench}. All LLM calls to \texttt{gemini-2.5-flash}, \texttt{gemini-2.5-pro}, and \texttt{gpt-4.1-mini} are executed via the LiteLLM framework. Both the agent and the user simulator are implemented as function-calling agents using the OpenAI tools format \cite{openai2023gpt4}. The agent prompt includes generic guidelines and domain-specific policies, while the user prompt includes generic constraints and task-specific instructions \cite{yao2023react}. All tasks are run with temperature $0$ and log probabilities enabled, ensuring stable and comparable estimates.

\begin{figure*}[t]
    \centering
    \includegraphics[width=\textwidth]{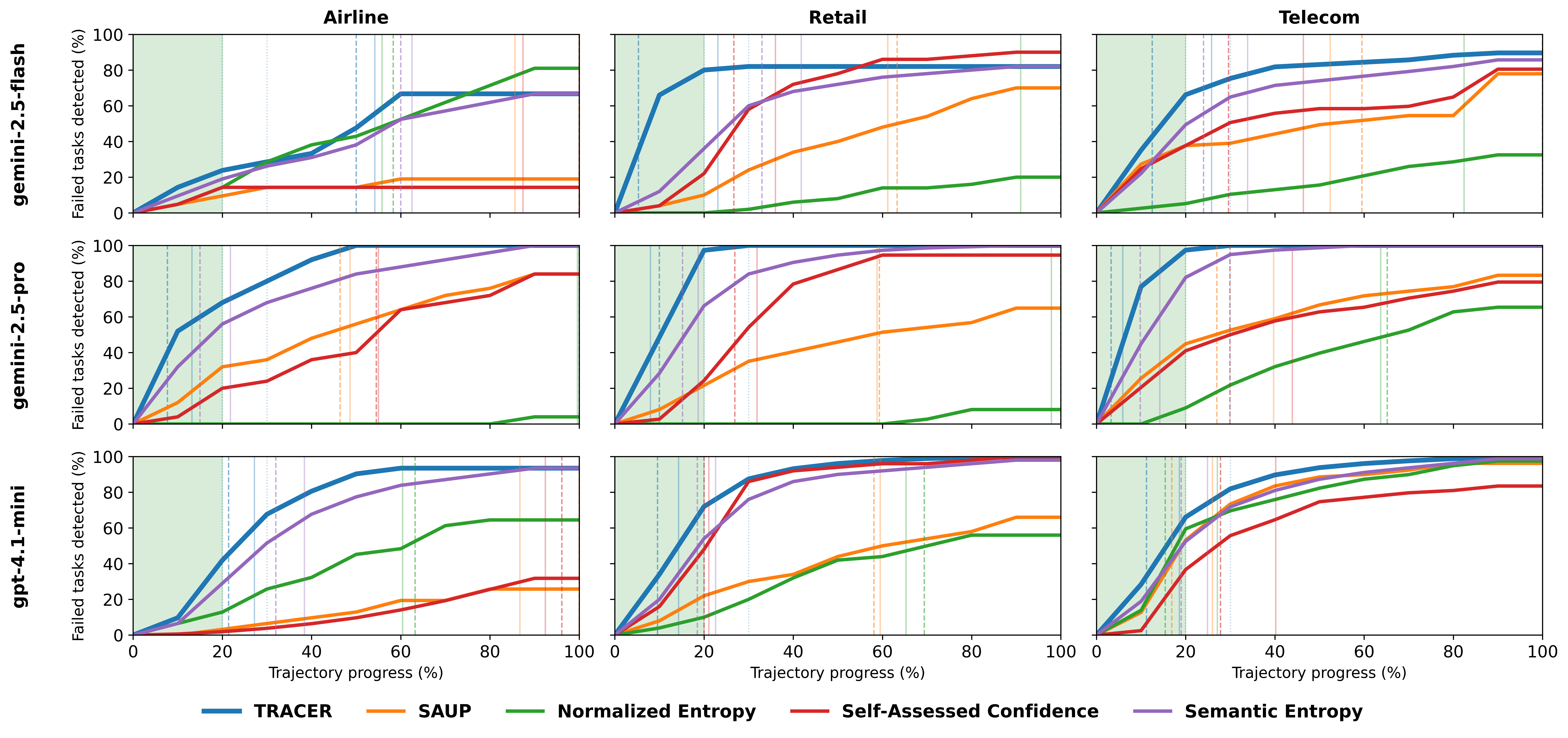}
    \caption{Early-warning detection curves showing the proportion of failed tasks detected by each metric as a function of trajectory progress. TRACER consistently signals failures earlier, especially within the first 20\% (highlighted) of execution.}
    \label{fig:early_warning}
\end{figure*}

\begin{table*}[t]
\centering
\caption{Ablation study: AUROC (↑) for predicting task failure using \textbf{agent-only (A)} vs.\ \textbf{user-only (U)} signals. Per row, we bold the single best method within A and within U.}
\label{tab:auroc_agent_user}
\setlength{\tabcolsep}{8pt}
\renewcommand{\arraystretch}{1}
\footnotesize
\resizebox{0.9\textwidth}{!}{%
\begin{tabular}{ll cc cc cc cc cc}
\toprule
Model & Domain
& \multicolumn{2}{c}{Norm. Ent.}
& \multicolumn{2}{c}{SelfConf}
& \multicolumn{2}{c}{SemEnt}
& \multicolumn{2}{c}{SAUP}
& \multicolumn{2}{c}{TRACER} \\
\cmidrule(lr){3-4}\cmidrule(lr){5-6}\cmidrule(lr){7-8}\cmidrule(lr){9-10}\cmidrule(lr){11-12}
& & A & U & A & U & A & U & A & U & A & U \\
\midrule
\multirow{3}{*}{gemini-2.5-pro}
& Airline
& 0.610 & 0.535
& 0.430 & 0.547
& 0.618 & 0.551
& 0.626 & 0.496
& \textbf{0.678} & \textbf{0.582} \\
& Retail
& 0.504 & 0.573
& 0.474 & 0.428
& 0.521 & \textbf{0.576}
& 0.555 & 0.524
& \textbf{0.637} & 0.489 \\
& Telecom
& 0.644 & 0.555
& 0.421 & 0.555
& 0.648 & 0.571
& \textbf{0.671} & 0.593
& 0.647 & \textbf{0.622} \\
\midrule
\multirow{3}{*}{gemini-2.5-flash}
& Airline
& 0.530 & 0.558
& 0.718 & 0.589
& \textbf{0.721} & 0.572
& 0.531 & 0.574
& 0.650 & \textbf{0.591} \\
& Retail
& 0.478 & 0.453
& 0.534 & 0.506
& 0.538 & \textbf{0.510}
& 0.468 & 0.465
& \textbf{0.692} & 0.4682 \\
& Telecom
& 0.543 & 0.546
& 0.377 & 0.556
& 0.566 & 0.563
& 0.626 & 0.596
& \textbf{0.672} & \textbf{0.602} \\
\midrule
\multirow{3}{*}{gpt-4.1-mini}
& Airline
& 0.582 & 0.500
& 0.421 & 0.366
& 0.585 & 0.521
& 0.590 & 0.536
& \textbf{0.602} & \textbf{0.725} \\
& Retail
& 0.575 & 0.633
& 0.568 & 0.544
& 0.578 & 0.635
& 0.560 & 0.632
& \textbf{0.579} & \textbf{0.636} \\
& Telecom
& 0.554 & 0.553
& 0.645 & 0.668
& 0.648 & 0.669
& 0.655 & 0.506
& \textbf{0.674} & \textbf{0.672} \\
\bottomrule
\end{tabular}
}
\end{table*}

\textbf{Problem formulation with TRACER.}
We formulate uncertainty estimation as a \emph{failure prediction} problem over TAU trajectories. For each episode $e$, let $y_e\in\{0,1\}$ denote the outcome, with $y_e=1$ indicating failure (task not solved under $\tau^2$ assertions). TRACER outputs a scalar uncertainty score at each agent decision step, $u_{e,t}$, computed from interaction-derived signals tailored to dual-control settings: (i) an observation-side signal $D_o$ capturing mismatches between expected and realized tool or world feedback, (ii) an action-side signal $D_a$ capturing deviations in action feasibility or task progress under decentralized control, and (iii) semantic and lexical similarity signals quantifying agent–user and tool-grounded misalignment.

\textbf{Evaluation metrics, baselines, and ablations.}
We evaluate task failure prediction using AUROC and uncertainty-guided selective task completion using AUARC \cite{geifman2017selective}. AUROC is computed at both the episode level $(s_e, y_e)$ and prefix level $(s_{e,t}, y_e)$, measuring how well uncertainty scores rank failures above successes; higher is better, with $0.5$ indicating random ranking \cite{hendrycks2017baseline}. We adopt AUROC due to its threshold-free evaluation and robustness under class imbalance across domains. We compare TRACER against standard uncertainty proxies, including normalized entropy, self-reported confidence, semantic entropy \cite{farquhar2024detecting}, and  \cite{zhao2025uncertainty}. In addition, we report on ablation studies.

\subsection{Failure Prediction Performance}
\label{sec:experiments:results}

\textbf{Task failure prediction and selective execution.}
We evaluate uncertainty metrics by their ability to (i) distinguish failed tasks from successful ones and (ii) support selective task execution under uncertainty in the $\tau^2$-bench environment. Following the evaluation protocol in Sec.~\ref{sec:methodology}, each episode is assigned a trajectory-level uncertainty score by aggregating step-wise interaction signals (Sec.~\ref{sec:methodology}) over the full dialogue and labeled as success or failure using benchmark-defined assertions. We report AUROC for both episode-level $(s_e, y_e)$ and prefix-level $(s_{e,t}, y_e)$ failure prediction, where higher values indicate better ranking of failures above successes and $0.5$ corresponds to random ordering. We additionally report AUARC, which evaluates decision quality under deferral by measuring accuracy as increasingly uncertain episodes are rejected.

Table~\ref{tab:auroc_auarc} summarizes both metrics across all models and domains using combined agent and user signals. TRACER achieves the highest AUROC and AUARC in every evaluated setting, outperforming all baseline uncertainty proxies. Relative to the strongest baseline, TRACER improves AUROC by 4.7\%--37.1\% and AUARC by 6\%--55\%, depending on the model and domain. Improvements are most pronounced for \texttt{gpt-4.1-mini}, where likelihood- and confidence-based proxies fail to reliably separate easy from hard episodes. Overall, these results show that TRACER’s trajectory-level uncertainty scores are both strongly discriminative and effective for selective execution, enabling reliable abstention in multi-turn, tool-using agentic settings.

\textbf{Early-warning analysis.}
\label{sec:early_warning}
We evaluate whether uncertainty metrics provide actionable \emph{early-warning signals} for impending failure. For each metric, we fix an operating threshold based on AUROC and record the first decision step at which the threshold is crossed in each failed trajectory. Detection time is normalized by total trajectory length to enable comparison across episodes of varying duration. We summarize performance using cumulative detection rates and mean and median normalized detection times.

Figure~\ref{fig:early_warning} reports cumulative failure detection as a function of trajectory progress. Across all models and domains, TRACER consistently signals failures earlier than all baselines, with the largest advantages in the first 20\% of execution, highlighted in the figure, where corrective intervention remains feasible. For \texttt{gemini-2.5-pro}, TRACER detects failures within the first 20\% of the trajectory in 68.0\% (airline), 97.3\% (retail), and 97.4\% (telecom) of failed tasks, compared to 56.0\%, 66.2\%, and 82.1\% for the strongest baseline (Semantic Entropy), yielding absolute gains of +12.0, +31.1, and +15.3 percentage points. Similar trends hold across other models; for example, on \texttt{gpt-4.1-mini} (retail), TRACER detects 72.0\% of failures before 20\% progress, versus 54.0\% for Semantic Entropy and 22.0\% for SAUP.

Median detection times show this pattern. TRACER typically crosses its failure threshold within 3.3\%--12.5\% of execution for \texttt{gemini-2.5-pro} and \texttt{gemini-2.5-flash}, whereas baseline metrics often trigger only after 25\%--60\% of the trajectory. These results show that TRACER is more discriminative and effective as an early-warning signal in the sparse-hazard regime that characterizes agentic breakdowns.

\begin{table}[t]
\centering
\caption{TRACER aggregation variants (AUROC $\uparrow$). Best results per row are shown in \textbf{bold}.}
\label{tab:tracer_ablation_variants}
\setlength{\tabcolsep}{1pt}
\renewcommand{\arraystretch}{1}
\footnotesize
\resizebox{0.9\linewidth}{!}{%
\begin{tabular}{llcccc}
\toprule
\textbf{Model} & \textbf{Domain} & \textbf{Additive} & \textbf{Multiplicative} & \textbf{Separate} & \textbf{MAX} \\
\midrule
\multirow{3}{*}{gemini-2.5-pro}
& Airline  & 0.726 & 0.661 & 0.711 & \textbf{0.735} \\
& Retail   & 0.660 & 0.582 & 0.623 & \textbf{0.673} \\
& Telecom  & 0.678 & 0.681 & 0.672 & \textbf{0.691} \\
\midrule
\multirow{3}{*}{gemini-2.5-flash}
& Airline  & 0.620 & 0.632 & 0.675 & \textbf{0.725} \\
& Retail   & 0.677 & 0.500 & 0.638 & \textbf{0.707} \\
& Telecom  & 0.795 & 0.785 & 0.764 & \textbf{0.809} \\
\midrule
\multirow{3}{*}{gpt-4.1-mini}
& Airline  & 0.731 & 0.643 & 0.648 & \textbf{0.742} \\
& Retail   & 0.668 & 0.627 & 0.679 & \textbf{0.689} \\
& Telecom  & 0.761 & 0.689 & 0.753 & \textbf{0.765} \\
\bottomrule
\end{tabular}
}
\end{table}

\subsection{Ablation Study}
\label{sec:experiments:ablation}

\textbf{Component analysis: Agent-side vs. user-side signals.}
We evaluate the contribution of agent-side and user-side uncertainty signals by selectively removing each component from TRACER. \emph{Agent-only} retains agent normalized surprisal and coherence gaps while removing all user-derived signals, and \emph{User-only} applies the converse. Table~\ref{tab:auroc_agent_user} reports AUROC for both. Removing either component leads to a substantial performance drop, confirming that TRACER benefits from jointly modeling both sides of the interaction. Agent-side signals carry the dominant predictive signal: removing them reduces TRACER’s average gain over the strongest baseline by approximately 85\%, while removing user-side signals reduces the gain by approximately 70\%. User signals thus provide complementary information, capturing coordination failures and delayed misalignment not reflected in agent likelihood alone. Neither component alone recovers full performance, supporting the dual-control design.

\textbf{Aggregation variant analysis.}
We compare four strategies for aggregating step-wise uncertainty signals: \emph{additive}, \emph{multiplicative}, \emph{separate} weighted channels, and a \emph{max}-composite formulation. All variants are tuned on held-out validation episodes and evaluated using AUROC on test trajectories. As shown in Table~\ref{tab:tracer_ablation_variants}, the MAX variant achieves the highest AUROC across all model–domain pairs. This result reflects the failure structure of agentic interaction, where breakdowns are driven by sparse, decisive uncertainty spikes rather than sustained moderate uncertainty. Max-based aggregation preserves these dominant signals, whereas additive or multiplicative coupling dilutes them, empirically supporting TRACER’s tail-focused design.

\section{Conclusion}
We introduced TRACER, a trajectory-level uncertainty metric for multi-turn, tool-using agentic interaction. TRACER models failure as arising from sparse critical episodes rather than uniformly elevated token uncertainty, combining content-aware surprisal with situational-awareness signals and tail-focused aggregation. Across $\tau^2$-bench domains and models, TRACER consistently improves failure separability, selective execution, and early-warning detection relative to likelihood- and confidence-based proxies. Ablations confirm the necessity of dual-control (agent and user) signals and the effectiveness of MAX-based aggregation for capturing decisive breakdowns. Together, these results position TRACER as a principled and practical uncertainty measure for real-world agentic systems, enabling earlier intervention and more reliable abstention in complex interactive settings.


\bibliography{paper}
\bibliographystyle{IEEEtran}

\appendices

\section{Formal Theorems: Calibration, Coherence, and Failure-Risk Control}
\label{app:formal_theorems}

This appendix provides theoretical justification that TRACER is a mathematically principled metric for estimating uncertainty in multi-turn, task-oriented, dual-control dialogue. We establish (i) an information-theoretic interpretation of the content-aware surprisal $U_t$, (ii) coherence of the tail-aggregation functional, (iii) stability and robustness guarantees, and (iv) conditions under which TRACER upper-bounds trajectory-level breakdown risk.

\subsection{Probabilistic Setup}

Let $(\Omega,\mathcal{F},\mathbb{P})$ be a probability space supporting the trajectory random variable $\mathcal{T}$ induced by the environment and the two stochastic policies $(\pi_{\mathrm{A}},\pi_{\mathrm{U}})$. Let $\{\mathcal{F}_t\}_{t=0}^N$ denote the filtration generated by the partial history up to step $t$:
\begin{equation}
\mathcal{F}_t \triangleq \sigma\!\left(\{(a_s,x_s,o_s)\}_{s=1}^t\right),
\qquad
\mathcal{F}_0 \triangleq \{\emptyset,\Omega\}.
\end{equation}

We model \emph{uncertainty} as a possibly latent stepwise random variable $C_t\in[0,1]$ measurable with respect to $\mathcal{F}_t$, where larger values indicate greater uncertainty, such as looping, incoherent tool invocation, or user-agent miscoordination at step $t$. The overall \emph{breakdown} event is
\begin{equation}
\label{eq:breakdown_event}
B \triangleq \left\{\max_{t\in\{1,\dots,N\}} C_t > 0\right\}.
\end{equation}
This formalizes that a single critical episode can derail an otherwise successful task-oriented conversation.

\subsection{Information-Theoretic Interpretation of Content-Aware Surprisal}

For a generative step $t$, let $Q_{t,j}(\cdot)\triangleq \mathbb{P}(W_{t,j}=\cdot \mid W_{t,<j},\mathcal{C}_t)$ denote the true conditional token distribution, and let $P_{t,j}(\cdot)$ denote the model distribution used to compute $p_t(\cdot)$. Define the content-filtered cross-entropy
\begin{equation}
\label{eq:filtered_cross_entropy}
H_t^{\mathrm{cont}}(Q,P)
\triangleq
\frac{1}{|\mathcal{I}_t|}
\sum_{j\in\mathcal{I}_t}
\mathbb{E}_{W\sim Q_{t,j}}\!\bigl[-\log P_{t,j}(W)\bigr],
\end{equation}
with a benign convention when $|\mathcal{I}_t|=0$.

\begin{theorem}[Decomposition to uncertainty and mismatch]
\label{thm:kl_decomp}
Assume $|\mathcal{I}_t|>0$ and $Q_{t,j}\ll P_{t,j}$ for all $j\in\mathcal{I}_t$:
\begin{equation}
\label{eq:kl_decomp}
\begin{aligned}
H_t^{\mathrm{cont}}(Q,P)
=\;
&\underbrace{
\frac{1}{|\mathcal{I}_t|}
\sum_{j \in \mathcal{I}_t}
H(Q_{t,j})
}_{\text{intrinsic (aleatoric) content uncertainty}} \\
&\;+\;
\underbrace{
\frac{1}{|\mathcal{I}_t|}
\sum_{j \in \mathcal{I}_t}
\mathrm{KL}\!\left(Q_{t,j}\,\|\,P_{t,j}\right)
}_{\text{epistemic mismatch on content tokens}} .
\end{aligned}
\end{equation}
Here $H(\cdot)$ denotes Shannon entropy and $\mathrm{KL}(\cdot\|\cdot)$ denotes Kullback-Leibler divergence.
Moreover, the sample statistic $U_t$ is an unbiased estimator of $H_t^{\mathrm{cont}}(Q,P)$ conditional on $(W_{t,<j},\mathcal{C}_t)$.
\end{theorem}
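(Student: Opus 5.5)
The decomposition~\eqref{eq:kl_decomp} is a per-token identity followed by averaging, so I would prove it position by position. Fix $j\in\mathcal{I}_t$ and write $Q=Q_{t,j}$, $P=P_{t,j}$ on the finite vocabulary $\mathcal{V}$. Since $Q\ll P$, every $w$ with $Q(w)>0$ has $P(w)>0$, so $-\log P(w)$ is finite on the support of $Q$. This lets me split
\[
-\log P(w) = -\log Q(w) + \log\frac{Q(w)}{P(w)}
\]
pointwise on that support. Taking $\mathbb{E}_{W\sim Q}$ of both sides gives
\[
\mathbb{E}_Q[-\log P(W)] = H(Q) + \mathrm{KL}(Q\|P),
\]
with all terms finite because $\mathcal{V}$ is finite. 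Summing over $j\in\mathcal{I}_t$ and dividing by $|\mathcal{I}_t|>0$ yields~\eqref{eq:kl_decomp} by linearity. The $H$ term is then the intrinsic content uncertainty and the $\mathrm{KL}$ term is the epistemic mismatch.

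For unbiasedness I would work one position at a time. Conditional on $(W_{t,<j},\mathcal{C}_t)$, the realized token $W_{t,j}$ is distributed according to $Q_{t,j}$. Hence
\[
\mathbb{E}\bigl[-\log P_{t,j}(W_{t,j})\mid W_{t,<j},\mathcal{C}_t\bigr]=\mathbb{E}_{W\sim Q_{t,j}}[-\log P_{t,j}(W)],
\]
so each summand of $U_t$ in~\eqref{eq:content_entropy} is unbiased for the matching summand of~\eqref{eq:filtered_cross_entropy}. To pass from summands to the average, I would use the tower property over $j=1,\dots,L_t$ together with linearity of expectation. This requires treating the index set $\mathcal{I}_t$, and hence the normalizer $1/|\mathcal{I}_t|$, as fixed under the conditioning.

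The main obstacle is exactly that step. The predicate~\eqref{eq:content_predicate} depends on the realized token itself, through $p_t(w_{t,j})\le\pi_0$ and the stop-word and numeric tests. So $\mathcal{I}_t$ is random and correlated with the surprisal values: conditioning on $j\in\mathcal{I}_t$ biases $-\log p_t$ upward, and the random $|\mathcal{I}_t|$ makes the ratio nonlinear. My first attempt would be to make explicit what ``conditional on the step context'' means: the content positions $\mathcal{I}_t$ (and the prefixes) are regarded as given.

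Two concrete ways to do that:
\begin{enumerate}
\item Interpret $Q_{t,j}$ for $j\in\mathcal{I}_t$ as the true distribution conditioned on the content event $\{\chi(W_{t,j})=1\}$, with $P_{t,j}$ renormalized on the same event. The per-token identity of the first paragraph applies verbatim to these restricted distributions, absolute continuity is inherited, and the averaged conditional expectation of $U_t$ equals $H_t^{\mathrm{cont}}$ exactly.
\item Alternatively, state the result for a position-only selection rule that is $(W_{t,<j},\mathcal{C}_t)$-measurable.
\end{enumerate}
In either case I would add a short remark that without such conditioning $U_t$ is only a consistent, plug-in type estimator, and that the decomposition itself holds unconditionally.
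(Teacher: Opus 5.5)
Your decomposition argument is the same as the paper's: split $-\log P_{t,j}(w)=-\log Q_{t,j}(w)+\log\bigl(Q_{t,j}(w)/P_{t,j}(w)\bigr)$ on the support of $Q_{t,j}$, take $\mathbb{E}_{Q_{t,j}}$, and average over $\mathcal{I}_t$. For unbiasedness the paper gives one sentence (it ``follows from its definition as the empirical mean''). That sentence silently treats $\mathcal{I}_t$ as fixed, which is exactly the step you flag, and your objection is correct. A one-token example shows the claim fails as stated. Let $L_t=1$ and let $Q_{t,1}=P_{t,1}$ put mass $1-\delta>\pi_0$ on a token $a$ and mass $\delta\le\pi_0$ on a token $b$, both passing the stop-word and numeric filters. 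Then $|\mathcal{I}_t|>0$ if and only if $W_{t,1}=b$. Under the theorem's standing assumption, $U_t=-\log\delta$ deterministically, while $H_t^{\mathrm{cont}}=H(Q_{t,1})\to 0$ as $\delta\to 0$. Refusing to prove the unbiasedness claim as stated is therefore the right call.

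The problems are in your repairs.

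\emph{Repair 1 renormalizes the wrong distribution.} Renormalizing $P_{t,j}$ breaks the link to $U_t$, whose summands use the unnormalized $-\log P_{t,j}$. Set $A_{t,j}\triangleq\{w:\chi(w)=1\}$ (prefix-measurable), $\tilde Q_{t,j}\triangleq Q_{t,j}(\cdot\mid A_{t,j})$ and $\tilde P_{t,j}\triangleq P_{t,j}(\cdot\mid A_{t,j})$. On $A_{t,j}$ one has $-\log P_{t,j}(w)=-\log\tilde P_{t,j}(w)-\log P_{t,j}(A_{t,j})$. So each selected summand is off by $-\log P_{t,j}(A_{t,j})\ge 0$, which can be large. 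In the example above, your restricted cross-entropy is $0$ (both restricted laws are the point mass at $b$) while $U_t=-\log\delta$. The fix is to condition only $Q$. Since $\tilde Q_{t,j}\ll Q_{t,j}\ll P_{t,j}$, your per-token identity gives $\mathbb{E}_{\tilde Q_{t,j}}[-\log P_{t,j}(W)]=H(\tilde Q_{t,j})+\mathrm{KL}(\tilde Q_{t,j}\|P_{t,j})$, and this is what a selected summand actually estimates.

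\emph{Neither repair fixes the normalizer.} Both repairs make each summand unbiased given its own prefix, but neither secures the fixed normalizer you correctly said is needed. Let $M_j\triangleq-\log P_{t,j}(W_{t,j})-\mathbb{E}_{W\sim Q_{t,j}}[-\log P_{t,j}(W)]$. The count $|\mathcal{I}_t|$ depends on $W_{t,j}$ through the later selections. Hence $\mathbb{E}\bigl[\mathbf{1}\{j\in\mathcal{I}_t\}\,M_j/|\mathcal{I}_t|\bigr]$ need not vanish even for a prefix-measurable rule, so repair 2 fails. Conditioning instead on the whole realized $\mathcal{I}_t$ tilts the law of $W_{t,j}$ by the later selection events, so it is no longer $\tilde Q_{t,j}$, and repair 1 fails for the same reason.

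Exact unbiasedness of the average is recovered, by the tower property, if $\mathcal{I}_t$ and $L_t$ are determined by $\mathcal{C}_t$ alone. TRACER's token-dependent predicate does not satisfy this. Otherwise, what holds exactly (for bounded $L_t$) is the following. The selected sum $\sum_j\mathbf{1}\{j\in\mathcal{I}_t\}\bigl(-\log P_{t,j}(W_{t,j})\bigr)$ minus its compensator $\sum_j\sum_{w\in A_{t,j}}Q_{t,j}(w)\bigl(-\log P_{t,j}(w)\bigr)$ has conditional mean zero given $\mathcal{C}_t$. Beyond that, $U_t$ is only a ratio-type estimator.
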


\begin{proof}
For each $j \in \mathcal{I}_t$,
\begin{equation*}
\begin{aligned}
\mathbb{E}_{W \sim Q_{t,j}}
\!\left[-\log P_{t,j}(W)\right]
&=
\mathbb{E}_{W \sim Q_{t,j}}
\!\left[-\log Q_{t,j}(W)\right] \\
&\quad+
\mathbb{E}_{W \sim Q_{t,j}}
\!\left[
\log \frac{Q_{t,j}(W)}{P_{t,j}(W)}
\right] \\
&=
H(Q_{t,j})
+
\mathrm{KL}\!\left(
Q_{t,j}\,\|\,P_{t,j}
\right).
\end{aligned}
\end{equation*}
Averaging over $j \in \mathcal{I}_t$ yields~\eqref{eq:kl_decomp}. Unbiasedness of $U_t$ follows from its definition as the empirical mean of $-\log P_{t,j}(W_{t,j})$ over $j \in \mathcal{I}_t$.
\end{proof}

\paragraph{Implication.}
Theorem~\ref{thm:kl_decomp} formally justifies $U_t$ as a content-focused uncertainty proxy: it increases either when the underlying next-token distribution is intrinsically high-entropy or when the model distribution mismatches the true distribution, and it is not diluted by predictable structural tokens.

\subsection{Coherence of the TRACER Tail Functional}

Define $\rho_{k,w}:\mathbb{R}^N\to\mathbb{R}$ by
\begin{equation}
\label{eq:rho_def}
\rho_{k,w}(\mathbf{r})
\triangleq
(1-w)\,\mathrm{TM}_k(\mathbf{r}) + w\,\|\mathbf{r}\|_\infty .
\end{equation}

\begin{theorem}[Coherence of $\rho_{k,w}$]
\label{thm:coherence}
For any $k\in(0,1]$ and $w\in[0,1]$, $\rho_{k,w}$ is a coherent risk functional on $\mathbb{R}^N$: it satisfies monotonicity, translation invariance, positive homogeneity, and subadditivity.
\end{theorem}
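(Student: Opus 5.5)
The plan is to write both pieces of $\rho_{k,w}$ as a \emph{maximum of linear functionals with nonnegative weights summing to one}, and then read off all four axioms at once. Throughout, translation invariance is taken in the cash-additive sense appropriate for a loss-oriented score: $\rho(\mathbf{r}+c\mathbf{1})=\rho(\mathbf{r})+c$ for $c\in\mathbb{R}$.

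First, interpret $\|\mathbf{r}\|_\infty$ as defined in Sec.~\ref{sec:topk_and_max}, namely $\|\mathbf{r}\|_\infty\triangleq r_{(1)}=\max_i r_i$. On $\mathbb{R}^N_{\ge 0}$ this coincides with the usual sup-norm. On all of $\mathbb{R}^N$, however, it must be read as the maximum rather than $\max_i|r_i|$. With the absolute value, translation invariance fails for negative shifts: for $\mathbf{r}=\mathbf{0}$ and $c=-1$ one gets $\|-\mathbf{1}\|=1\neq -1$. I expect this to be the only real subtlety, and I will state the interpretation explicitly at the start of the proof.

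Second, I will give the variational representation of the tail mean. With $K=\max\{1,\lfloor kN\rfloor\}$ fixed,
\[
\mathrm{TM}_k(\mathbf{r})=\max_{S\subseteq\{1,\dots,N\},\,|S|=K}\ \frac{1}{K}\sum_{i\in S} r_i .
\]
This holds because the sum of $K$ entries is maximized by choosing the $K$ largest ones. Similarly, $\max_i r_i=\max_{i}\langle e_i,\mathbf{r}\rangle$. Hence each piece has the form $g(\mathbf{r})=\max_{\mu\in M}\langle \mu,\mathbf{r}\rangle$, where $M$ is a finite set of probability vectors on $\{1,\dots,N\}$.

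Third, I will verify the axioms for any such $g$:
\begin{itemize}
\item \emph{Monotonicity.} If $\mathbf{r}\le\mathbf{r}'$ componentwise, then $\langle\mu,\mathbf{r}\rangle\le\langle\mu,\mathbf{r}'\rangle$ for every $\mu\ge 0$, so $g(\mathbf{r})\le g(\mathbf{r}')$.
\item \emph{Translation invariance.} Since $\langle\mu,\mathbf{1}\rangle=1$, we have $\langle\mu,\mathbf{r}+c\mathbf{1}\rangle=\langle\mu,\mathbf{r}\rangle+c$ for every $\mu\in M$, hence $g(\mathbf{r}+c\mathbf{1})=g(\mathbf{r})+c$.
\item \emph{Positive homogeneity.} For $\lambda\ge 0$, $\max_\mu\lambda\langle\mu,\mathbf{r}\rangle=\lambda\max_\mu\langle\mu,\mathbf{r}\rangle$, so $g(\lambda\mathbf{r})=\lambda g(\mathbf{r})$.
\item \emph{Subadditivity.} A maximum of sums is at most the sum of maxima: $\max_\mu\bigl(\langle\mu,\mathbf{r}\rangle+\langle\mu,\mathbf{r}'\rangle\bigr)\le g(\mathbf{r})+g(\mathbf{r}')$.
\end{itemize}

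Finally, I will close under convex combination. Each axiom is preserved by nonnegative combinations $(1-w)g_1+w g_2$ with $w\in[0,1]$: monotonicity, homogeneity and subadditivity are linear inequalities or equalities in $g$, so they carry over termwise. For translation invariance the two shifts combine as $(1-w)c+wc=c$. This yields coherence of $\rho_{k,w}$ for every $k\in(0,1]$ and $w\in[0,1]$.

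As a by-product, the same representation gives the $1$-Lipschitz property under $\ell_\infty$ cited in the main text. For every $\mu\in M$, $|\langle\mu,\mathbf{r}-\mathbf{r}'\rangle|\le\|\mathbf{r}-\mathbf{r}'\|_\infty$ in the usual sup-norm sense, and this bound passes through both the maximum and the convex combination.
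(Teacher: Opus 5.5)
Your proof is correct and follows the paper's skeleton: each of $\mathrm{TM}_k$ and $\|\cdot\|_\infty$ is coherent, and a convex combination with weights summing to one preserves all four axioms. What differs is how you handle the two pieces. The paper cites coherence of $\mathrm{TM}_k$ as empirical CVaR and argues that $\|\cdot\|_\infty$ is subadditive because it is a norm. You instead derive everything from the single representation $g(\mathbf{r})=\max_{\mu\in M}\langle\mu,\mathbf{r}\rangle$ over finitely many probability vectors. That makes each axiom a one-line check and gives the $\ell_\infty$-Lipschitz bound of Theorem~\ref{thm:lipschitz} as a by-product.

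The more substantive point is your reading $\|\mathbf{r}\|_\infty=r_{(1)}=\max_i r_i$, which is the paper's own definition in Sec.~\ref{sec:topk_and_max}. This reading is what makes the statement true on all of $\mathbb{R}^N$. The paper's norm-based justification implicitly uses $\max_i|r_i|$, and under that reading two axioms fail. Monotonicity fails: $-2\mathbf{1}\le-\mathbf{1}$ componentwise, yet $\max_i|r_i|$ drops from $2$ to $1$. Translation invariance fails, as your example $\mathbf{r}=\mathbf{0}$, $c=-1$ shows. Under the max reading the map is no longer a norm, but it is still subadditive for the reason you give: a maximum of sums is at most the sum of maxima.

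The paper's argument is therefore valid only on $\mathbb{R}^N_{\ge 0}$, with shifts that stay in the orthant. There the two readings coincide, and that is where TRACER's step risks actually live. Your argument is self-contained and proves the theorem as stated.
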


\begin{proof}
Both $\mathrm{TM}_k$ (empirical CVaR / tail mean) and $\|\cdot\|_\infty$ are monotone, translation invariant, and positively homogeneous; $\|\cdot\|_\infty$ is a norm hence subadditive, and $\mathrm{TM}_k$ is a coherent tail-risk functional (subadditive). A convex combination preserving these properties yields the claim.
\end{proof}

\subsection{Stability to Local Perturbations}

Because $U_t$ is computed from finite-length generations and $D_a,D_o$ are embedding-based distances, the per-step components are noisy. The MAX composition is itself stable.

\begin{lemma}[Max-aggregation is nonexpansive]
\label{lem:max_nonexpansive}
Let $(u_1,\dots,u_m)$ and $(v_1,\dots,v_m)$ be vectors in $\mathbb{R}^m$. Then
\[
\left|\max_{i\le m} u_i - \max_{i\le m} v_i\right|
\le
\max_{i\le m} |u_i - v_i|.
\]
In particular, if each component is perturbed by at most $\varepsilon$, then $|r_t-r_t'|\le \varepsilon$.
\end{lemma}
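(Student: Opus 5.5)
The plan is to prove the one-sided inequality $\max_i u_i - \max_i v_i \le \max_i |u_i - v_i|$ directly. The reverse inequality then follows by swapping the roles of $u$ and $v$, and together the two give the absolute-value bound.

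\textbf{Step 1.} Let $i^\star$ be an index attaining $\max_{i\le m} u_i$; such an index exists since $m$ is finite. Then
\[
\max_{i\le m} u_i - \max_{i\le m} v_i \;\le\; u_{i^\star} - v_{i^\star} \;\le\; |u_{i^\star} - v_{i^\star}| \;\le\; \max_{i\le m} |u_i - v_i|.
\]
The first inequality holds because $\max_i v_i \ge v_{i^\star}$. An equivalent route is to note that $u_i \le v_i + \|u-v\|_\infty \le \max_j v_j + \|u-v\|_\infty$ for every $i$, and then take the maximum over $i$.

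\textbf{Step 2.} Exchanging $u$ and $v$ gives $\max_i v_i - \max_i u_i \le \max_i |u_i - v_i|$. Combining this with Step 1 yields the stated bound.

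\textbf{Step 3 (corollary).} Apply the lemma with $m=4$, taking $u = \mathbf{c}_t = (r_t^{U}, r_t^{\mathrm{rep}}, r_t^{\mathrm{A}}, r_t^{\mathrm{Uo}})$ and $v = \mathbf{c}_t'$ as its perturbed counterpart. By~\eqref{eq:step_risk_max}, $r_t = \max_i (\mathbf{c}_t)_i$, so if each component changes by at most $\varepsilon$ then $|r_t - r_t'| \le \|\mathbf{c}_t - \mathbf{c}_t'\|_\infty \le \varepsilon$.

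One interpretive point needs care. The phrase ``each component is perturbed by at most $\varepsilon$'' should be read as referring to the weighted components $r_t^{U}$, $\alpha D_{\mathrm{rep}}(t)$, and so on. If it instead refers to the raw signals $U_t, D_a, D_o^{A}, D_o^{U}$, the bound becomes $\max\{1,\alpha,\beta,\gamma\}\,\varepsilon$. I will state this explicitly. There is no real obstacle otherwise; the only other subtlety is that the argument requires the maximum to be attained, which finiteness of $m$ guarantees.
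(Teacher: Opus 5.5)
Your proof is correct and follows the paper's argument exactly: fix a maximizing index $i^\star$ for $u$, use $\max_i v_i \ge v_{i^\star}$, and swap the roles of $u$ and $v$ to obtain the absolute-value bound. Your remark that the corollary gives $\varepsilon$ for perturbations of the weighted components, but $\max\{1,\alpha,\beta,\gamma\}\varepsilon$ for perturbations of the raw signals, is a correct clarification that the paper leaves implicit.
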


\begin{proof}
Let $i^\star\in\arg\max_i u_i$. Then
\[
\max_i u_i - \max_i v_i
=
u_{i^\star} - \max_i v_i
\le
u_{i^\star} - v_{i^\star}
\le
\max_i |u_i-v_i|.
\]
Swap $u$ and $v$ to bound the absolute value.
\end{proof}

\begin{theorem}[$\ell_\infty$-Lipschitz stability]
\label{thm:lipschitz}
Let $\mathbf{r},\mathbf{s}\in\mathbb{R}^N$. Then for any $k\in(0,1]$ and $w\in[0,1]$,
\begin{equation}
\label{eq:lipschitz}
\bigl|\rho_{k,w}(\mathbf{r})-\rho_{k,w}(\mathbf{s})\bigr|
\ \le\
\|\mathbf{r}-\mathbf{s}\|_\infty.
\end{equation}
\end{theorem}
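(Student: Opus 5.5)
The plan is to prove the Lipschitz bound separately for the two pieces of $\rho_{k,w}$, each with constant $1$ under $\ell_\infty$, and then combine them by the triangle inequality. Since $(1-w)+w=1$,
\[
\bigl|\rho_{k,w}(\mathbf{r})-\rho_{k,w}(\mathbf{s})\bigr|
\le (1-w)\bigl|\mathrm{TM}_k(\mathbf{r})-\mathrm{TM}_k(\mathbf{s})\bigr| + w\bigl|\|\mathbf{r}\|_\infty-\|\mathbf{s}\|_\infty\bigr|,
\]
so it suffices to bound each bracket by $\|\mathbf{r}-\mathbf{s}\|_\infty$.

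For the worst-case term, recall that the paper defines $\|\mathbf{r}\|_\infty\triangleq r_{(1)}=\max_t r_t$. This is the plain maximum, not the norm $\max_t|r_t|$; for nonnegative step risks the two coincide. Lemma~\ref{lem:max_nonexpansive}, applied with $m=N$, gives directly
\[
|\max_t r_t-\max_t s_t|\le\max_t|r_t-s_t| = \|\mathbf{r}-\mathbf{s}\|_\infty.
\]
If one instead reads $\|\cdot\|_\infty$ as the genuine norm, the same bound follows from the reverse triangle inequality.

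The main step is the tail mean. I would use the variational form of the top-$K$ average,
\[
\mathrm{TM}_k(\mathbf{r})=\frac{1}{K}\max_{S\subseteq\{1,\dots,N\},\,|S|=K}\ \sum_{t\in S} r_t .
\]
This holds because the sum over any $K$-subset is at most the sum of the $K$ largest order statistics, with equality for the subset of top indices. Note that $K=\max\{1,\lfloor kN\rfloor\}$ depends only on $N$ and $k$, so the same $K$ is used for $\mathbf{r}$ and $\mathbf{s}$.

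Now let $\Delta=\|\mathbf{r}-\mathbf{s}\|_\infty$, so that $r_t\le s_t+\Delta$ for every $t$. Then for each fixed $S$,
\[
\frac1K\sum_{t\in S}r_t\le \frac1K\sum_{t\in S}s_t+\Delta\le \mathrm{TM}_k(\mathbf{s})+\Delta.
\]
Maximizing over $S$ gives $\mathrm{TM}_k(\mathbf{r})\le\mathrm{TM}_k(\mathbf{s})+\Delta$. Swapping the roles of $\mathbf{r}$ and $\mathbf{s}$ gives the reverse inequality, and hence $|\mathrm{TM}_k(\mathbf{r})-\mathrm{TM}_k(\mathbf{s})|\le\Delta$.

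The main obstacle is justifying this subset-maximum representation, since a naive index-by-index comparison of order statistics mixes up coordinates. If I want to avoid it, an alternative is to show directly that sorting is nonexpansive, $|r_{(i)}-s_{(i)}|\le\Delta$ for each $i$. This follows from the min–max characterization $r_{(i)}=\max_{|S|=i}\min_{t\in S}r_t$ and the same monotone-shift argument. Averaging over $i\le K$ then gives the same conclusion. Finally, plugging both bounds into the convex combination yields \eqref{eq:lipschitz}.
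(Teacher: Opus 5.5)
Your proof is correct. Its skeleton matches the paper's: bound each of the two pieces of $\rho_{k,w}$ by $\Delta=\|\mathbf{r}-\mathbf{s}\|_\infty$, then recombine using $(1-w)+w=1$. The difference is in the tail-mean step. The paper states without proof that sorting is nonexpansive, $|r_{(i)}-s_{(i)}|\le\Delta$ for every $i$, and then averages over $i\le K$. Your main route avoids that lemma by writing $K\,\mathrm{TM}_k(\mathbf{r})=\max_{|S|=K}\sum_{t\in S}r_t$. This makes $\mathrm{TM}_k$ a pointwise maximum of coordinate averages, each of which moves by at most $\Delta$; it is the empirical form of the CVaR dual representation. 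As a by-product, it also proves the convexity and subadditivity of $\mathrm{TM}_k$ that Theorem~\ref{thm:coherence} takes for granted. Your fallback via $r_{(i)}=\max_{|S|=i}\min_{t\in S}r_t$ is exactly the paper's argument, with the missing lemma supplied. The per-rank route is more general in one direction: it gives $1$-Lipschitzness for any $L$-statistic $\sum_i a_i r_{(i)}$ with $\sum_i|a_i|\le 1$, including non-monotone weights such as the median. The subset-maximum route covers any supremum of probability-weighted averages of coordinates and needs no rank-by-rank matching. Your remark on $\|\mathbf{r}\|_\infty\triangleq r_{(1)}$ is also apt. For the term inside $\rho_{k,w}$ the bound holds under either reading. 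On the right-hand side, however, the genuine norm $\max_t|r_t-s_t|$ is required: with the literal definition, taking $r_t=s_t-1$ for all $t$ would give $1\le -1$. That genuine-norm reading is the one you used.
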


\begin{proof}
Since order statistics are $1$-Lipschitz in $\|\cdot\|_\infty$, we have
$|r_{(i)} - s_{(i)}| \le \|\mathbf{r} - \mathbf{s}\|_\infty$ for all $i$.
Therefore,
\[
\left|
\mathrm{TM}_k(\mathbf{r})
-
\mathrm{TM}_k(\mathbf{s})
\right|
\le
\|\mathbf{r} - \mathbf{s}\|_\infty .
\]
Moreover,
\[
\bigl|
\|\mathbf{r}\|_\infty
-
\|\mathbf{s}\|_\infty
\bigr|
\le
\|\mathbf{r} - \mathbf{s}\|_\infty .
\]
Combining both terms yields~\eqref{eq:lipschitz}.
\end{proof}

\subsection{TRACER Upper-Bounds Breakdown Risk under Sparse Hazards}

Define the conditional hazard
\begin{equation}
\label{eq:hazard}
\lambda_t \triangleq \mathbb{P}\!\left(C_t>0 \mid \mathcal{F}_{t-1}\right)\in[0,1].
\end{equation}

\begin{lemma}[Union bound for breakdown probability]
\label{lem:union}
With $B$ defined in~\eqref{eq:breakdown_event},
\[
\mathbb{P}(B)
\le
\mathbb{E}\!\left[\sum_{t=1}^N \lambda_t\right].
\]
\end{lemma}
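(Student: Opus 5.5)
The plan is to write $B$ as a finite union of the per-step events and then turn each unconditional probability into an expected conditional hazard using the tower property. Since $C_t\in[0,1]$, we have $\max_t C_t>0$ if and only if $C_t>0$ for some $t$. So
\[
B=\bigcup_{t=1}^N E_t,\qquad E_t\triangleq\{C_t>0\}.
\]
Each $E_t$ is $\mathcal{F}_t$-measurable, hence $\mathcal{F}$-measurable, so all the probabilities below are well defined. Countable (here finite) subadditivity of $\mathbb{P}$ then gives
\[
\mathbb{P}(B)\le\sum_{t=1}^N\mathbb{P}(E_t).
\]

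Next I convert each term using the tower property of conditional expectation with respect to $\mathcal{F}_{t-1}\subseteq\mathcal{F}$:
\[
\mathbb{P}(E_t)=\mathbb{E}[\mathbf{1}_{E_t}]=\mathbb{E}\!\left[\mathbb{E}[\mathbf{1}_{E_t}\mid\mathcal{F}_{t-1}]\right]=\mathbb{E}[\lambda_t],
\]
which is exactly the definition in~\eqref{eq:hazard}. Each $\lambda_t$ is an $[0,1]$-valued random variable, so it is integrable. Summing over $t$ and applying linearity of expectation over finitely many terms then yields $\mathbb{P}(B)\le\mathbb{E}\bigl[\sum_{t=1}^N\lambda_t\bigr]$.

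The only subtle point is the trajectory length $N$. If $N$ is deterministic, the argument above is complete. If $N$ is random (a stopping time), I would first fix a deterministic horizon $N_{\max}$ and set $C_t\equiv 0$ for $t>N$, which makes $\lambda_t=0$ there on $\{N<t\}$ provided $\{N<t\}\in\mathcal{F}_{t-1}$. I would then run the same argument and let $N_{\max}\to\infty$, using monotone convergence on both sides since all terms are nonnegative. In the paper's setting $N$ is treated as fixed, so I expect no real obstacle. The proof is essentially the union bound combined with iterated expectations.
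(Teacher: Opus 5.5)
Your proof is correct and follows the paper's argument: the paper bounds $\mathbf{1}_B \le \sum_{t=1}^N \mathbf{1}\{C_t>0\}$, takes expectations, and applies the tower property to get $\mathbb{E}[\mathbf{1}\{C_t>0\}]=\mathbb{E}[\lambda_t]$, which is your finite-subadditivity-plus-iterated-expectation step in indicator form. Your remark on a random, stopping-time horizon $N$ goes beyond what the paper treats, but the statement as posed does not need it.
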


\begin{proof}
$\mathbf{1}_B \le \sum_{t=1}^N \mathbf{1}\{C_t>0\}$. Taking expectations and applying the tower property yields the claim.
\end{proof}

\begin{assumption}[Risk-dominates hazard (MAX form)]
\label{assump:dominance}
There exists $c>0$ such that for all $t$,
\[
\lambda_t \le \min\{1,\, c\,r_t\}.
\]
\end{assumption}

\begin{assumption}[Tail-sparsity of critical episodes]
\label{assump:sparsity}
For $K$ defined in~\eqref{eq:K_def}, there exists $\eta\ge 0$ such that
\[
\sum_{i=K+1}^{N} r_{(i)} \le \eta.
\]
\end{assumption}

\begin{theorem}[Breakdown risk control by tail mean]
\label{thm:breakdown_bound}
Under Assumptions~\ref{assump:dominance}--\ref{assump:sparsity},
\[
\mathbb{P}(B)
\le
c\left(K\,\mathbb{E}\!\left[\mathrm{TM}_k(\mathbf{r}(\mathcal{T}))\right] + \eta\right).
\]
Moreover, since $\mathrm{TRACER}_{\boldsymbol{\theta}}(\mathcal{T}) \ge (1-w)\mathrm{TM}_k(\mathbf{r}(\mathcal{T}))$, we also have
\[
\mathbb{P}(B)
\le
\frac{cK}{1-w}\,\mathbb{E}\!\left[\mathrm{TRACER}_{\boldsymbol{\theta}}(\mathcal{T})\right] + c\eta,
\qquad w<1.
\]
\end{theorem}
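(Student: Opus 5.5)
The plan is to chain Lemma~\ref{lem:union} with the two assumptions and then rewrite the resulting sum of step risks through the order statistics. Lemma~\ref{lem:union} gives $\mathbb{P}(B)\le \mathbb{E}[\sum_{t=1}^N \lambda_t]$. Assumption~\ref{assump:dominance} then bounds each hazard pointwise by $\lambda_t\le c\,r_t$, where we simply discard the $\min\{1,\cdot\}$. Summing over $t$ and using monotonicity of expectation gives
\[
\mathbb{P}(B)\le c\,\mathbb{E}\Bigl[\sum_{t=1}^N r_t\Bigr].
\]

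The second step works pathwise. The sum $\sum_t r_t$ is invariant under permutation, so it equals $\sum_{i=1}^N r_{(i)}$. We split this at the index $K=\max\{1,\lfloor kN\rfloor\}$:
\[
\sum_{i=1}^N r_{(i)}=\sum_{i=1}^K r_{(i)}+\sum_{i=K+1}^N r_{(i)}=K\,\mathrm{TM}_k(\mathbf{r})+\sum_{i=K+1}^N r_{(i)}.
\]
The first term is $K\,\mathrm{TM}_k(\mathbf{r})$ directly from \eqref{eq:topk_mean}. Assumption~\ref{assump:sparsity} bounds the remainder by $\eta$. Taking expectations then yields the first claim, $\mathbb{P}(B)\le c(K\,\mathbb{E}[\mathrm{TM}_k]+\eta)$.

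There is a technical point here that I expect to be the main obstacle, or at least the point needing care. If $N$ is random, $K$ is random too, and $K$ cannot be pulled outside the expectation. I would handle this by treating $N$ (hence $K$) as deterministic, as the statement implicitly does. The alternative is to state the bound with $\mathbb{E}[K\,\mathrm{TM}_k]$ and note that it reduces to the stated form when $N$ is fixed. Similarly, I would read Assumption~\ref{assump:sparsity} as holding almost surely, so that it survives the expectation.

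For the second claim, I would use that $\|\mathbf{r}\|_\infty\ge 0$ because the step risks are nonnegative, and $w\ge 0$. Together these give $\mathrm{TRACER}_{\boldsymbol\theta}(\mathcal{T})\ge(1-w)\mathrm{TM}_k(\mathbf{r})$. For $w<1$ this rearranges to $\mathrm{TM}_k\le \mathrm{TRACER}/(1-w)$, and substituting into the first bound finishes the proof. One could note in passing that the sharper bound $\|\mathbf{r}\|_\infty\ge\mathrm{TM}_k$ would even give $\mathrm{TRACER}\ge\mathrm{TM}_k$, but this is not needed.
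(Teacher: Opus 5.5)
Your proposal is correct and follows the paper's proof essentially step for step: the union bound, dropping the $\min$ in Assumption~\ref{assump:dominance}, rewriting $\sum_t r_t$ via order statistics, splitting at $K$, and bounding the remainder by $\eta$. Your extra care is justified and consistent with the paper: treating $N$ (hence $K$) as deterministic, reading Assumption~\ref{assump:sparsity} almost surely, and using nonnegativity of $\mathbf{r}$ for the second claim. Your side remark that $\mathrm{TM}_k\le\|\mathbf{r}\|_\infty$ is also correct and would remove the $1/(1-w)$ factor.
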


\begin{proof}
By Lemma~\ref{lem:union} and Assumption~\ref{assump:dominance},
\[
\mathbb{P}(B)
\le
c\,\mathbb{E}\!\left[\sum_{i=1}^{N} r_{(i)}\right]
=
c\,\mathbb{E}\!\left[
\sum_{i=1}^{K} r_{(i)}
+
\sum_{i=K+1}^{N} r_{(i)}
\right].
\]
The first term equals $cK\,\mathbb{E}[\mathrm{TM}_k(\mathbf{r})]$ and the second is bounded by $c\eta$.
\end{proof}

\subsection{Dual-Control Decomposition and Actor-Wise Attribution}

Define actor-wise step risks
\begin{align}
r_t^{\mathrm{A}}
&\triangleq
\mathbf{1}\{a_t=\mathrm{Agent}\}\,
\max\!\left\{
U_t,\ \alpha D_a(t),\ \beta D_o^{\mathrm{A}}(t)
\right\},
\\
r_t^{\mathrm{U}}
&\triangleq
\mathbf{1}\{a_t=\mathrm{User}\}\,
\max\!\left\{
U_t,\ \gamma D_o^{\mathrm{U}}(t)
\right\}.
\end{align}
Then $r_t = r_t^{\mathrm{A}} + r_t^{\mathrm{U}}$.

\begin{theorem}[Subadditive actor-wise attribution]
\label{thm:actor_subadd}
\[
\mathrm{TRACER}_{\boldsymbol{\theta}}(\mathcal{T})
\le
\mathrm{TRACER}_{\boldsymbol{\theta}}(\mathcal{T}^{\mathrm{A}})
+
\mathrm{TRACER}_{\boldsymbol{\theta}}(\mathcal{T}^{\mathrm{U}}).
\]
\end{theorem}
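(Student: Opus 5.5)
The plan is to reduce the claim to subadditivity of $\rho_{k,w}$ from Theorem~\ref{thm:coherence}, applied to the actor-wise risk vectors. Write $\mathbf{r}^{\mathrm{A}}\triangleq(r_1^{\mathrm{A}},\dots,r_N^{\mathrm{A}})$ and $\mathbf{r}^{\mathrm{U}}\triangleq(r_1^{\mathrm{U}},\dots,r_N^{\mathrm{U}})$. We interpret $\mathrm{TRACER}_{\boldsymbol{\theta}}(\mathcal{T}^{\mathrm{A}})$ and $\mathrm{TRACER}_{\boldsymbol{\theta}}(\mathcal{T}^{\mathrm{U}})$ as $\rho_{k,w}(\mathbf{r}^{\mathrm{A}})$ and $\rho_{k,w}(\mathbf{r}^{\mathrm{U}})$. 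Under this reading the actor-filtered trajectories keep length $N$, with zero risk at steps of the other actor. Keeping the length $N$ fixes $K$, which is necessary: if we deleted steps instead, $K$ would change with the trajectory length and the comparison would break down.

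\textbf{Step 1: the pointwise identity $r_t=r_t^{\mathrm{A}}+r_t^{\mathrm{U}}$.} Exactly one actor acts per step, so exactly one of the two indicators equals $1$.

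If $a_t=\mathrm{Agent}$, the user component $r_t^{\mathrm{Uo}}$ in \eqref{eq:step_risk_max} vanishes. Then $r_t=\max\{U_t,\alpha D_a(t),\beta D_o^{\mathrm{A}}(t)\}=r_t^{\mathrm{A}}$, while $r_t^{\mathrm{U}}=0$.

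If $a_t=\mathrm{User}$, the agent-coherence term vanishes. The repetition term also vanishes, since $D_{\mathrm{rep}}$ is defined only for agent turns; we take it to be $0$ otherwise, and this convention has to be stated explicitly. Hence $r_t=\max\{U_t,\gamma D_o^{\mathrm{U}}(t)\}=r_t^{\mathrm{U}}$. This confirms the identity $\mathbf{r}=\mathbf{r}^{\mathrm{A}}+\mathbf{r}^{\mathrm{U}}$ asserted before the theorem.

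\textbf{Step 2: apply subadditivity.}
\[
\rho_{k,w}(\mathbf{r})=\rho_{k,w}(\mathbf{r}^{\mathrm{A}}+\mathbf{r}^{\mathrm{U}})\le\rho_{k,w}(\mathbf{r}^{\mathrm{A}})+\rho_{k,w}(\mathbf{r}^{\mathrm{U}}).
\]

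To make this self-contained rather than relying on the brief proof of Theorem~\ref{thm:coherence}, I would verify subadditivity of each piece directly.

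For $\|\cdot\|_\infty$, subadditivity is the triangle inequality.

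For $\mathrm{TM}_k$, use the variational form $K\,\mathrm{TM}_k(\mathbf{r})=\max_{|S|=K}\sum_{t\in S}r_t$. The sum of the $K$ largest entries is a maximum of linear functionals, so it is convex and positively homogeneous, hence subadditive:
\[
\max_S\sum_{t\in S}(r^{\mathrm{A}}_t+r^{\mathrm{U}}_t)\le\max_S\sum_{t\in S} r^{\mathrm{A}}_t+\max_S\sum_{t\in S} r^{\mathrm{U}}_t.
\]
Dividing by $K$ and taking the convex combination with weights $1-w$ and $w\ge 0$ preserves the inequality.

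\textbf{Main obstacle.} The main difficulty is not analytic but definitional: pinning down what $\mathcal{T}^{\mathrm{A}}$ and $\mathcal{T}^{\mathrm{U}}$ mean so that the two sides are comparable. I would state the zero-padding, same-length convention explicitly, together with the convention $D_{\mathrm{rep}}(t)=0$ on user turns.

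As a remark, under the alternative convention where the other actor's steps are deleted, the inequality can fail for $\mathrm{TM}_k$, because a shorter length gives a smaller $K$ and a larger tail mean can result only in one direction. This is why the padded convention is the one to adopt.
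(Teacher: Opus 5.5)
Your argument is correct and matches the paper's proof: establish $\mathbf{r}=\mathbf{r}^{\mathrm{A}}+\mathbf{r}^{\mathrm{U}}$ componentwise on same-length, zero-padded vectors and invoke subadditivity of $\rho_{k,w}$. Your explicit conventions (padding, and $D_{\mathrm{rep}}(t)=0$ on user turns) and your direct check of subadditivity of $\mathrm{TM}_k$ fill in details the paper leaves implicit.

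One correction to your closing remark: the inequality does not fail under the deletion convention, so padding is the natural reading but not a necessary one. All entries are nonnegative, $K_{\mathrm{A}},K_{\mathrm{U}}\le K$, and the top-$j$ mean is nonincreasing in $j$, so deleting the other actor's steps can only enlarge each right-hand term.
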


\begin{proof}
Since $\mathbf{r}=\mathbf{r}^{\mathrm{A}}+\mathbf{r}^{\mathrm{U}}$ componentwise, subadditivity of $\rho_{k,w}$ yields the result.
\end{proof}

\subsection{Parameter Identification via Separation Objectives}

TRACER depends on $\boldsymbol{\theta}\triangleq(\alpha,\beta,\gamma,k,w)$. Given labeled trajectories $\mathcal{D}=\{(\mathcal{T}^{(i)}, y^{(i)})\}_{i=1}^M$, we define $s_{\boldsymbol{\theta}}^{(i)}=\mathrm{TRACER}_{\boldsymbol{\theta}}(\mathcal{T}^{(i)})$ and minimize the pairwise logistic loss
\[
\mathcal{L}(\boldsymbol{\theta})
=
\frac{1}{|\mathcal{F}||\mathcal{S}|}
\sum_{i\in\mathcal{F}}
\sum_{j\in\mathcal{S}}
\log\!\Bigl(1+\exp\bigl(-\tfrac{1}{\tau}(s_{\boldsymbol{\theta}}^{(i)}-s_{\boldsymbol{\theta}}^{(j)})\bigr)\Bigr).
\]

We adopt a discrete coarse-to-fine grid search over $(\alpha,\beta,\gamma,k)$ followed by calibration of $w$. Empirically, MAX aggregation often emphasizes $D_o^{\mathrm{A}}$ in failed trajectories; tuned configurations therefore assign larger $\beta$ while suppressing weaker predictors to reduce noise.

\subsection{Computational Complexity}

Let $L_t$ denote the token length at step $t$. Computing content-aware surprisal costs $\mathcal{O}(\sum_t L_t)$. Situational metrics require embedding computation with cost
\[
\mathcal{O}\!\Bigl(\sum_{t=1}^N c_{\varphi}(z(x_t)) + \sum_{t=1}^N c_{\varphi}(z(o_t))\Bigr).
\]
MAX step composition costs $\mathcal{O}(N)$, and tail aggregation requires $\mathcal{O}(N\log N)$ time. Overall complexity is
\[
\mathcal{O}\!\Bigl(
\sum_{t=1}^N L_t
+
\sum_{t=1}^N c_{\varphi}(z(x_t))
+
\sum_{t=1}^N c_{\varphi}(z(o_t))
+
N\log N
\Bigr).
\]

\end{document}